\title{Convergence of Expected Utilities with Algorithmic Probability Distributions}
\author{Peter de Blanc\\Department of Mathematics\\Temple University}
\thanks{Thanks to Nick Hay at Cornell University for help with the Recursion Theorem.}
\date{\today}
\begin{document}
\begin{titlepage}
\maketitle
\end{titlepage}
\section{Abstract}

We consider an agent interacting with an unknown environment. The environment is a function which maps natural numbers to natural numbers; the agent's set of hypotheses about the environment contains all such functions which are computable and compatible with a finite set of known input-output pairs, and the agent assigns a positive probability to each such hypothesis (Probability distributions over all computable functions are used in theoretical AI systems such as AIXI (Hutter, 2007)). We do not require that this probability distribution be computable, but it must be bounded below by a positive computable function.

The agent has a utility function on outputs from the environment. We show that if this utility function is bounded below in absolute value by an unbounded computable function, then the expected utility of any input is undefined.

This implies that a computable utility function will have convergent expected utilities iff that function is bounded.

\section{Notation}

Here we set up our notation.

Let $R$ be the set of partial $\mu$-recursive functions, and let $S = R \cap \mathbb{N}^\mathbb{N}$; that is, $S$ is the set of total $\mu$-recursive functions on a single argument. We will use an index $R = \{\phi_n\}_{n=0}^{\infty}$, where our index $\phi$ is a G\"{o}del numbering of the computable functions.

Let $h \in \mathbb{N}^{\mathbb{N}}$ be the true function that describes the environment. The agent has some finite set of tested inputs $I$, so the agent knows the value of $h(i)$ for all $i \in I$. Let $S_I = \{f \in S : (\forall i \in I), f(i) = h(i)\}$. That is, $S_I$ is the set of hypotheses which agree with the agent's knowledge of tested inputs.

Let $H$ be our set of hypotheses about the environment, with $S_I \subseteq H \subseteq \mathbb{N}^{\mathbb{N}}$.

Let $p: H \rightarrow \mathbb{R}$ be our probability distribution on the hypothesis set. We require that there exist some computable function $\bar{p} : \mathbb{N} \rightarrow \mathbb{Q}$ such that $(\forall \phi_n \in S_I), 0 < \bar{p}(n) \le p(\phi_n)$.

Let $U : \mathbb{N} \rightarrow \mathbb{R}$ be the agent's utility function. We suppose that $U$ is unbounded, and that there exists an unbounded computable function $\bar{U}: \mathbb{N} \rightarrow \mathbb{Q} : (\forall n \in \mathbb{N}), |\bar{U}(n)| \le |U(n)|$.

\section{Proof of Divergence of Expected Utilities}
\label{thesec}

Fix $k \in (\mathbb{N} - I)$, an input whose expected utility we will consider. Then define $B : \mathbb{N} \rightarrow \mathbb{N}$, with

\begin{equation}
	\label{bdef}
	(\forall x \in \mathbb{N}), B(x) = \max{\{\phi_n(k) : n \in \mathbb{N}, n \le x\}}
\end{equation}

The sequence $\left\{B(j)\right\}_{j=0}^{\infty}$ can be thought of as an analog of the Busy Beaver numbers (Rad\'{o}, 1962). This sequence will be used in proving our main result. Note that $B$ is not a computable function, as we are about to show.

\newtheorem{lemb}{Lemma}
\begin{lemb}
	\label{thelemb}
	Let $f \in S$. Then $B(x) > f(x)$ infinitely often.
\end{lemb}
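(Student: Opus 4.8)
The plan is to argue by contradiction, with Kleene's Recursion Theorem as the main engine. Assume the conclusion fails, i.e. that $B(x) > f(x)$ for only finitely many $x$. Since $f$ is total, this yields a constant $N$ with $B(x) \le f(x)$ for every $x \ge N$; unwinding~\eqref{bdef}, this says that whenever $n \le x$, $x \ge N$, and $\phi_n(k)$ halts, we have $\phi_n(k) \le f(x)$.

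The first step is to absorb $f$ and the fixed constant $N$ into a single total computable function, for instance $g(x) = 1 + \max\{\,f(j) : 0 \le j \le x + N\,\}$. This $g$ is computable (because $f$ is and $N$ is a constant) and satisfies $g(x) \ge 1 + f(x + N)$. By the $s$-$m$-$n$ theorem there is a total computable $h$ such that $\phi_{h(i)}$ is the constant function with value $g(i)$; in particular $\phi_{h(i)}(k) = g(i)$ for every $i$.

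The second step is the self-reference. Applying the Recursion Theorem to $h$ produces an index $e$ with $\phi_e = \phi_{h(e)}$, so that $\phi_e$ is the constant function $g(e)$ and, crucially, $\phi_e(k) = g(e)$ is defined. Now set $x = e + N$. Then $x \ge N$ and $e \le x$, so $\phi_e(k)$ is one of the terms in the maximum defining $B(x)$, whence
\[
	B(x) \ge \phi_e(k) = g(e) \ge 1 + f(e + N) = 1 + f(x) > f(x),
\]
contradicting $B(x) \le f(x)$. Hence no such $N$ exists, which is exactly the assertion that $B(x) > f(x)$ infinitely often.

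I expect the crux to be the self-reference in the second step: the number we want $\phi_e$ to output, namely $g(e)$, refers to the index $e$ of the very program computing it, and resolving this circularity is precisely what the Recursion Theorem provides. The innocuous-looking shift by $N$ inside $g$ is the small device that makes the final chain of inequalities close — it guarantees that $g(e)$ overtakes $f$ not merely at $e$ but at the larger argument $x = e + N$, which is where the standing hypothesis $B(x) \le f(x)$ is actually invoked. Everything else — computability of $g$, and the fact that the maximum in~\eqref{bdef} is a running maximum — is routine bookkeeping.
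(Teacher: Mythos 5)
Your proof is correct and follows essentially the same route as the paper: a proof by contradiction in which Kleene's Recursion Theorem produces a self-referential constant program whose output at $k$ forces $B$ to exceed the assumed computable bound. The only differences are cosmetic --- you absorb the finitely many exceptional points by shifting the argument past $N$ rather than adding the constant $\max\{B(x)-f(x) : x \in \mathbb{N}\}$ as the paper does, and you should rename your $s$-$m$-$n$ function $h$, since that letter already denotes the environment function in the paper.
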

\begin{proof}
	Suppose not. Then $B(x) > f(x)$ only finitely many times, so let $F(x) = 1 + f(x) + \max{\{B(x) - f(x) : x \in \mathbb{N}\}}$. Then $F \in S$, and $(\forall x \in \mathbb{N}), F(x) > B(x)$.

	Let $Q: \mathbb{N}^2 \rightarrow \mathbb{N}$, $Q(i, x) = F(i)$. By a corollary of the recursion theorem (Kleene, 1938), there exists $p \in \mathbb{N}$ such that $(\forall x \in \mathbb{N}), \phi_p(x) = Q(p, x) = F(p)$.

	By equation ~\ref{bdef}, $B(p) \ge \phi_p(k) = F(p)$. This contradicts our statement that $(\forall x \in \mathbb{N}), F(x) > B(x)$.
\end{proof}

The expected utility of inputting $k$ to the environment is:

\begin{equation}
	\label{euseries}
	EU(h(k)) = \sum_{f \in H} p(f) U(f(k))
\end{equation}

\newtheorem{bigthm}{Theorem}
\begin{bigthm}
	The series in equation ~\ref{euseries} does not converge.
\end{bigthm}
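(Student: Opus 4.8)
The plan is to produce infinitely many pairwise distinct hypotheses $f \in S_I$ with $p(f)\,|U(f(k))| \ge 1$. Since $S_I \subseteq H$, each such $f$ contributes a term $p(f)U(f(k))$ to the series \eqref{euseries} whose absolute value is at least $1$; a convergent series has all but finitely many of its terms smaller than $1$ in absolute value, so the presence of infinitely many such terms forces \eqref{euseries} to diverge. Hence it is enough to construct the family.

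The engine is the same corollary of the recursion theorem used in Lemma~\ref{thelemb}, applied now to a self-referential definition. First I would fix a computable ``inverse'' of $\bar U$: since $\bar U : \mathbb{N} \to \mathbb{Q}$ is computable and unbounded, the function $r : \mathbb{N} \to \mathbb{N}$ given by $r(n) = \min\{v : |\bar U(v)| \ge 1/\bar p(n)\}$ when $\bar p(n) > 0$, and $r(n) = 0$ otherwise, is total and computable, and it satisfies $|\bar U(r(n))| \ge 1/\bar p(n)$ whenever $\bar p(n) > 0$. I would also fix some $x_0 \in \mathbb{N} \setminus (I \cup \{k\})$, which exists because $I$ is finite and $k \notin I$. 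For each $m \in \mathbb{N}$ define $Q_m : \mathbb{N}^2 \to \mathbb{N}$ by $Q_m(i,x) = h(x)$ for $x \in I$, $Q_m(i,k) = r(i)$, $Q_m(i,x_0) = m$, and $Q_m(i,x) = 0$ for every other $x$; since $I$ is finite the finitely many values $h(i)$ ($i \in I$) can be built in, so each $Q_m$ is total computable.

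Next I would invoke the recursion-theorem corollary: for each $m$ there is $p_m \in \mathbb{N}$ with $\phi_{p_m}(x) = Q_m(p_m,x)$ for all $x$. Then $\phi_{p_m}$ is total and agrees with $h$ on $I$, so $\phi_{p_m} \in S_I$; in particular $\bar p(p_m) > 0$, and since $k \notin I$ and $k \ne x_0$ we get $\phi_{p_m}(k) = r(p_m)$ with $|\bar U(\phi_{p_m}(k))| \ge 1/\bar p(p_m)$. Therefore $p(\phi_{p_m})\,|U(\phi_{p_m}(k))| \ge \bar p(p_m)\,|\bar U(\phi_{p_m}(k))| \ge 1$, using $\bar p(p_m) \le p(\phi_{p_m})$ and $|\bar U(j)| \le |U(j)|$ for all $j$. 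Finally $\phi_{p_m}(x_0) = m$, so the $\phi_{p_m}$ are pairwise distinct, which yields the required infinite family and completes the argument.

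The one real obstacle is the circularity built into the construction: I want a hypothesis in $S_I$ whose output at $k$ lands on a value where $|\bar U|$ is at least the reciprocal of the computable lower bound $\bar p$ on that same hypothesis's probability, and this specification only becomes meaningful once the hypothesis's own index is pinned down — which is exactly what the recursion theorem provides. Two things need care in a full writeup: $\bar p$ is only guaranteed positive on indices of members of $S_I$, so it is essential that the recursion-theorem index $p_m$ actually lands in $S_I$ (this is why $Q_m$ copies $h$ on $I$); and $\bar U$ is not assumed monotone, so the output must be aimed at a ``record'' value of $|\bar U|$ rather than merely made large (the role of $r$), while the spare coordinate $x_0$ is just a bookkeeping device to turn one bad hypothesis into infinitely many. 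This route leans on the recursion theorem directly rather than on the growth rate of the Busy-Beaver-like function $B$, but it exploits the same underlying fact as Lemma~\ref{thelemb}: no computable function can bound the outputs of short programs.
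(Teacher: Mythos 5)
Your proof is correct, but it takes a genuinely different route from the paper's. The paper first isolates the Busy-Beaver-style function $B(x) = \max\{\phi_n(k) : n \le x\}$, proves via the recursion theorem that $B$ exceeds every total computable function infinitely often (Lemma~\ref{thelemb}), then builds hypotheses $\phi_{G(u_j)}$ whose utility at $k$ is at least $B(j)$ and compares $B$ against a computable majorant $q$ of $1/\bar{p}(G(\cdot))$ --- so only infinitely many, not all, of its constructed terms are certified to have absolute value $\ge 1$. You instead apply the recursion theorem directly to a self-referential specification: each program evaluates the computable lower bound $\bar{p}$ at its own index and steers its output at $k$ to a point where $|\bar{U}|$ beats $1/\bar{p}$, so \emph{every} hypothesis you construct satisfies $p(f)\,|U(f(k))| \ge 1$ on the nose, and infinitude comes from the padding coordinate $x_0$. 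This dispenses with $B$, with Lemma~\ref{thelemb}, and with the ``infinitely often'' bookkeeping, and it also makes explicit a point the paper glosses over: since the series in equation~\ref{euseries} is indexed by functions $f \in H$ rather than by program indices, one must check that the constructed hypotheses are pairwise distinct as functions --- your $x_0$ trick does this directly, whereas in the paper it follows only after observing that $B(j)$ is unbounded. What the paper's route buys is the reusable growth-rate fact about $B$; what yours buys is a shorter, more self-contained argument. The two details to spell out in a full writeup are exactly the ones you flag: decidability of $\bar{p}(n) > 0$ (immediate, since $\bar{p}$ is an exactly computed rational-valued function) and termination of the search defining $r$ (immediate, since $|\bar{U}|$ is unbounded).
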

\begin{proof}
To establish that this series does not converge, we will show that infinitely many of its terms have absolute value $\ge 1$. We will do this by constructing a sequence of hypotheses in $S_I$ whose utility grows very quickly - as quickly as the function $B$ - faster than their probabilities can shrink.

By equation ~\ref{bdef}, for each $j \in \mathbb{N}$ there exists $u_j \in \mathbb{N}$, $u_j \le j$ such that $\phi_{u_j}(k) = B(j)$. Now we define a map on function indices $G: \mathbb{N} \rightarrow \mathbb{N}$ such that:
\begin{displaymath}
	\phi_{G(n)}(x) = \left\{ \begin{array}{ll}
		h(x) & \textrm{if $x \in I$}\\
		\min{\{y \in \mathbb{N} : |\bar{U}(y)| \ge |\phi_n(k)|\}} & \textrm{otherwise}
	\end{array} \right.
\end{displaymath}
Because $|\bar{U}|$ is an unbounded function, $\phi_{G(n)}$ is a total function as long as $\phi_n(k)$ is defined. By definition, $(\forall n \in \mathbb{N}), \phi_{G(n)} \in S_I \subseteq H$.

So our sequence of hypotheses will be $\left\{ \phi_{G(u_j)} \right\}_{j=1}^{\infty}$.  Because $k \notin I$, we have:
\begin{equation}
	\label{utileq}
	(\forall j \in \mathbb{N}), |U(\phi_{G(u_j)}(k))| \ge |\bar{U}(\phi_{G(u_j)}(k))| \ge \phi_{u_j}(k) = B(j)
\end{equation}

We're almost done. We now let $q : \mathbb{N} \rightarrow \mathbb{N}$, with:

\begin{equation}
	\label{qdef}
	q(x) = \lceil \sup{\{\frac{1}{\bar{p}(G(y))} : y \in \mathbb{N}, y \le x\}} \rceil
\end{equation}

Then $q$ is a nondecreasing function in $S$ with the property that $(\forall x \in \mathbb{N}), q(x) \ge \bar{p}(G(x))^{-1}$. By Lemma ~\ref{thelemb}, $B(x) \ge q(x)$ infinitely often. It follows from equations ~\ref{utileq} and ~\ref{qdef} that:
\begin{equation}
	\label{bigineq}
	|U(\phi_{G(u_j)}(k))| \ge B(j) \ge q(j) \ge q(u_j) \ge \bar{p}(G(u_j))^{-1} \ge p(\phi_{G(u_j}))^{-1}
\end{equation}
for infinitely many $j \in \mathbb{N}$.

Because $p(G(u_j))$ is always positive, and by the above equation, it follows that:
\begin{equation}
	|p(\phi_{G(u_j})) U(\phi_{G(u_j)}(k))| \ge 1
\end{equation} 
infinitely often. Because these are all terms of the series in equation ~\ref{euseries}, the series can not converge.
\end{proof}

\section{Remarks}
We have shown that utility functions which are bounded below in absolute value by an unbounded computable function do not have convergent expected utilities.

Because all computable functions are bounded below in absolute value by themselves, it follows that all unbounded computable utility functions have divergent expected utilities. Since bounded utility functions always have convergent expected utilities, we know that \emph{a computable utility function has convergent expected utilities iff it is bounded.}

\end{document}